\title{LevAttention: Time, Space, and Streaming Efficient Algorithm for Heavy Attentions} 
\author{
Ravindran Kannan\footnote{Ravindran Kannan is listed as the first author. The remaining authors are ordered alphabetically.} \\ Simons Institute, UC Berkeley \\ \texttt{kannan100@gmail.com} \and Chiranjib Bhattacharya \\ Indian Institute of Scicence \\ \texttt{chiru@iisc.ac.in} \and Praneeth Kacham \\ Google Research \\ \texttt{pkacham@google.com} \and David P. Woodruff\footnote{Part of this work done while D. Woodruff was at Google Research and at the Simons Institute for the Theory of Computing. D. Woodruff also acknowledges Office of Naval Research award number  N000142112647.} \\ Carnegie Mellon University \\ \texttt{dwoodruf@cs.cmu.edu}
}
\date{}
\begin{document}
\maketitle
\begin{abstract}
A central problem related to transformers can be stated as follows: given two $n \times d$ matrices $Q$ and $K$, and a non-negative function $f$, define the matrix $A$ as follows: (1) apply the function $f$ to each entry of the $n \times n$ matrix $Q K^T$, and then (2) normalize each of the row sums of $A$ to be equal to $1$. The matrix $A$ can be computed in $O(n^2 d)$ time assuming $f$ can be applied to a number in constant time, but the quadratic dependence on $n$ is prohibitive in applications where it corresponds to long context lengths. For a large class of functions $f$, we show how to find all the ``large attention scores", i.e., entries of $A$ which are at least a positive value $\varepsilon$, in time with linear dependence on $n$ (i.e., $n \cdot \textrm{poly}(d/\varepsilon)$) for a positive parameter $\varepsilon > 0$. Our class of functions include all functions $f$ of the form $f(x) = |x|^p$, as explored recently in transformer models. Using recently developed tools from randomized numerical linear algebra, we prove that for any $K$, there is a ``universal set" $U \subset [n]$ of size independent of $n$, such that for any $Q$ and any row $i$, the large attention scores $A_{i,j}$ in row $i$ of $A$ all have $j \in U$. We also find $U$ in $n \cdot \textrm{poly}(d/\varepsilon)$ time. Notably, we 
(1) make no assumptions on the data, (2) our workspace does not grow with $n$, and (3) our algorithms can be computed in streaming and parallel settings. We call the attention mechanism that uses only the subset of keys in the universal set as LevAttention since our algorithm to identify the universal set $U$ is based on leverage scores. We empirically show the benefits of our scheme for vision transformers, showing how to train new models that use our universal set while training as well, showing that our model is able to consistently select ``important keys'' during training.
\end{abstract}

\section{Introduction}
A transformer architecture is one of the most popular architectures for building foundation models, with applications to natural language processing, computer vision, and many other modalities and their combinations. It is well-known that exact computation of their attention layers na\"ively requires quadratic (in the context length) time, which poses a huge problem for scalability. A large body of work has tried to improve the efficiency of computing attention layers under a variety of assumptions, including imposing sparsity constraints \citep{parmar2018image,child2019generating,beltagy2020longformer,kitaev2020reformer,tay2020sparse}, kernel methods \citep{bello2021lambdanetworks,choromanski2021rethinking,peng2021random,zheng2022randomized}, low rank assumptions \citep{wang2020linformer,xiong2021nystromformer,ma2021nested}, and assumptions of bounded entries or conditions on column or row norms \citep{as23,HJKMWZ24}. 

In each attention layer, one receives as input an $n \times d'$ input matrix $X$, which may be the embedding of the tokenization of the input, or an input from previous layers in the transformer. Here $n$ is the context length and $d'$ is the embedding dimension of each token, which is typically much smaller than $n$. From $X$ we multiply by three $d' \times d$ learned matrices $W^Q$, $W^K$, and $W^V$, and define the query matrix $Q = X \cdot W^Q$, the key matrix $K = X \cdot W^K$, and the value matrix $V = X \cdot W^V$. One then outputs the attention matrix, which is defined to be $D^{-1} \cdot f(QK^T) \cdot V$, where $D$ and $f(QK^T)$ are each $n \times n$ matrices defined as follows. For a non-negative function $f$, we apply $f$ entry-wise to the $n \times n$ matrix $QK^T$ to form $f(QK^T)$. Then we let $D$ be a diagonal matrix with $D_{i,i} = \sum_{j=1}^n f(\langle Q_i, K_j \rangle )$, where $Q_i$ is the $i$-th row of $Q$ and $K_j$ is the $j$-th row of $K$, respectively. Let $A = D^{-1} f(QK^T)$. Note that the entries of each row of $A$ are non-negative and sum to $1$, and each row of $A \cdot V$ can be viewed as a non-negative combination of the rows of $V$, where the coefficients of the combination sum to $1$. 

Instantiating the above framework with $f(x) = e^{x/\sqrt{d}}$ corresponds to taking a softmax of each row of $QK^T / \sqrt{d}$ and then multiplying by $V$. In this case, by appropriately scaling the hard instance in \cite{as23}, one can show that computing attention with high precision requires $n^{2-o(1)}$ time under a standard complexity-theoretic assumption. In an attempt to bypass this hardness, recent work has considered replacing $f$ with other functions, such as $f(x) = x^p$ for a positive even integer $p$. Indeed, both the PolySketchFormer \citep{KMZ24} as well as the work of \citet{SSWZ23} consider using TensorSketch to speed up the computation of an approximate attention matrix for such functions $f$. Motivated by these works, we define the $f$-sensitivities $\sigma^f_i$ of an $n \times d$ matrix $K$ for $i = 1, 2, \ldots, n$ as follows:
$$\sigma^f_i(K) = \sup_{y \neq 0} \frac{f(\langle K_i, y \rangle)}{\sum_{j=1}^n f(\langle K_j, y \rangle ) }.$$

When $f(x) = x^2$, these are just the so-called {\it leverage scores} of the rows of matrix $K$, which are well-studied in randomized numerical linear algebra (see, e.g., \cite{mahoney2011randomized,woodruff2014sketching}). For general $p$, these are known as the $\ell_p$-sensitivities, which are also well-studied (see, e.g., \cite{woodruff23a,padmanabhan2023computing}), and they can be bounded by the so-called $\ell_p$-Lewis weights of the matrix $K$ \citep{Cohen2015} (for background, see, e.g., Section 3.3 of \cite{clarkson2019dimensionality}). Many interesting properties of such scores are known. One such property is the following. Let $\Psi^f = \sup_K \sum_{i=1}^n \sigma^f_i(K)$. When $f(x) = |x|^p$ for $1 \leq p \leq 2$, one has $\Psi^f \leq d$, and when $f(x) = |x|^p$ for $p \geq 2$, one has $\Psi^f \leq d^{p/2}$. These bounds do not depend on the context length $n$. 

Critical to our work will be the observation in practice that the matrix $A$ is often well-approximated by retaining only its large entries, i.e., preserving all entries above a certain threshold $\varepsilon > 0$ and replacing the remaining entries with $0$, or perhaps fitting a low rank approximation to the remaining entries \citep{gupta2021memory, wang2022kvt}. The entries of $A$ are called the attention scores or attention weights, and we will say a score is {\it large} if its value is at least $\varepsilon$. 

\subsection{Our Contributions}
We outline our contributions below. 

{\bf Existence of a Universal Set.} We prove that for a large class of functions $f$, for any key matrix $K$, there is a small ``universal set" $U \subset [n] = \{1, 2, \ldots, n\}$ of size independent of $n$, such that for any query matrix $Q$ and any $i \in \{1, 2, \ldots, n\}$, the large attention scores $A_{i,j}$ in row $i$ of $A$ all have $j \in U$. One of our results, which combines some ideas from Sections \ref{sec:2} and \ref{sec:3}, is the following:

\begin{theorem}\label{thm:main-intro}
Let $f$ be a non-negative function and let $\Psi^f = \sup_K \sum_{i=1}^n \sigma^f_i(K)$. There is a subset $U \subset [n]$ of size $\Psi^f / \varepsilon$ so that for any query matrix $Q$ and $i \in \{1, 2, \ldots, n\}$, if $A_{i,j} \geq \varepsilon$, then $j \in U$. 
\end{theorem}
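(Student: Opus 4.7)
The plan is to observe that the attention score $A_{i,j}$ is a specific instantiation of the supremum defining the $f$-sensitivity $\sigma^f_j(K)$, so $A_{i,j}$ is automatically bounded by $\sigma^f_j(K)$. From there, the universal set is simply the set of indices whose sensitivities are at least $\varepsilon$, and its size follows from a Markov-style counting against $\Psi^f$.

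More concretely, first I would unfold the definitions. For any query row $Q_i$, setting $y = Q_i$ in the definition of $\sigma^f_j(K)$ gives
\[
\sigma^f_j(K) \;=\; \sup_{y \ne 0}\frac{f(\langle K_j, y\rangle)}{\sum_{\ell=1}^n f(\langle K_\ell, y\rangle)} \;\geq\; \frac{f(\langle K_j, Q_i\rangle)}{\sum_{\ell=1}^n f(\langle K_\ell, Q_i\rangle)} \;=\; A_{i,j}.
\]
Thus the sensitivities of the rows of $K$ dominate every attention score in the corresponding column of $A$, uniformly across all query matrices $Q$ and all rows $i$.

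Next I would define $U := \{\,j \in [n] : \sigma^f_j(K) \geq \varepsilon\,\}$. This set depends only on $K$ (not on $Q$ or $i$), which is exactly the ``universal'' property we want. The display above implies that whenever $A_{i,j} \geq \varepsilon$ we must have $\sigma^f_j(K) \geq \varepsilon$, so $j \in U$.

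Finally I would bound $|U|$. Using the hypothesis $\sum_{j=1}^n \sigma^f_j(K) \leq \Psi^f$, the count of indices with $\sigma^f_j(K) \geq \varepsilon$ is at most $\Psi^f/\varepsilon$, giving $|U| \leq \Psi^f/\varepsilon$. The only subtle point — not really an obstacle but worth flagging — is that the statement takes $\Psi^f$ as a uniform bound over all key matrices, so no further work is required to argue that the particular $K$ at hand satisfies the bound; the sensitivity counting is immediate. Everything else is just a one-line Markov-type argument.
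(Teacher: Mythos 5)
Your proposal is correct and follows essentially the same argument as the paper: instantiate the supremum in the definition of $\sigma^f_j(K)$ at $y = Q_i$ to get $A_{i,j} \leq \sigma^f_j(K)$, take $U$ to be the indices with sensitivity at least $\varepsilon$, and bound $|U|$ by a Markov-style count against $\Psi^f$. Nothing is missing.
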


We note that for $f(x) = x^2$, the $f$-sensitivities are just the leverage scores of the matrix $K$, and it is known that $\Psi^f = d$ (see, e.g., \cite{mahoney2011randomized,woodruff2014sketching}). In this case, by Theorem \ref{thm:main-intro}, we have $|U| = d/\varepsilon$. More generally, if $f(x) = |x|^p$, we have $\Psi^f \leq d$ for $1 \leq p \leq 2$ and $\Psi^f \leq d^{p/2}$ for $p > 2$ (see, e.g., Section 3.3 of \cite{clarkson2019dimensionality}), and so $|U| \leq \max(d, d^{p/2}) / \varepsilon$. Additional bounds are known for other functions $f$, e.g., from the work of \cite{musco2022active}. There, for example, if $f(x)$ is the Huber function $f(x) = x^2$ for $|x| \leq \tau$ and $f(x) = |x|$ for larger $|x|$, then $\Psi = O(d \log n)$, or if $f(x)$ is the Tukey function $f(x) = x^2$ for $|x| \leq \tau$, and $f(x) = \tau$ for larger $|x|$, then $\Psi = O(d \log n)$. Here $\tau > 0$ is any specified threshold. 

We give an outline of the proof of Theorem \ref{thm:main-intro} here. If $A_{i,j}$ is a large attention score, then $A_{i,j} = \frac{f(\langle Q_i, K_j \rangle)}{\sum_{\ell} f(\langle Q_i, K_{\ell})} \geq \varepsilon$. It follows that $\sup_y \frac{f(\langle y, K_j \rangle)}{\sum_{\ell} f(\langle y, K_{\ell} \rangle)} > \varepsilon$ and so $\sigma^f_i(K) \geq \varepsilon$. If we define $U$ to be the set of $i$ for which $\sigma^f_i(K) \geq \varepsilon$, then $i \in U$ and we have $|U| \cdot \varepsilon \leq \Psi^f(K)$, and so $|U| \leq  \Psi^f(K)/\varepsilon$. 

We stress that our set $U$ does not depend on any particular query or query matrix $Q$, i.e., for any possible future query $q$, any large attention scores it participates in necessarily involve keys in $U$. 

{\bf Fast Algorithm for Finding $U$.} We note that there are very efficient algorithms for computing the set $U$ for many interesting functions $f$. For example, when $f(x) = x^2$, these are the large leverage scores and a simple way of computing them is by computing a QR-decomposition of the matrix $K$ and finding all rows with squared norm at least $\varepsilon$. This takes $O(nd^2)$ time and has the advantage of computing the leverage scores exactly, resulting in the smallest possible set $U$. There are also sketching techniques for more quickly finding the large leverage scores \citep{drineas2012fast,clarkson2013low} in time nnz$(K) + \textrm{poly}(d/\varepsilon)$, up to logarithmic factors, where nnz denotes the number of non-zero entries of $K$. Similarly, for $f(x) = |x|^p$ one can use the nnz$(K) + \textrm{poly}(d/\varepsilon)$ time algorithm for finding $\ell_p$-Lewis weights in \cite{Cohen2015}, and the fact that a scaling of these weights bounds the $f$-sensitivities (see, e.g., Section 3.3 of \cite{clarkson2019dimensionality}). 

{\bf Time and Memory-Efficient Algorithm for Finding Large Attentions Given $U$.} We do not make any assumptions on the data and our theorem holds for {\it any query} matrix $Q$, and thus given any potentially new query $q$, one only needs to search in the set $U$ for the set of large attention scores involving $q$. Thus, instead of na\"ively spending $O(nd)$ time to walk through each possible key to find the large attention scores, one only needs to spend $O(\Phi^f d / \varepsilon)$ time, assuming that $f$ can be evaluated in constant time. Notice that our time per query does not grow with the context length $n$. Further, as we only store the keys in the set $U$, our workspace also does not grow with the context length. 

We also show how, for $f(x) = x^p$ for even integers $p$ it is possible to spend at most $n \cdot \textrm{poly}(d)$ time preprocessing $K$ so that from $U$ and $K$, given a query $q$, one can output the {\it exact} value of all large attention scores involving $q$ in poly$(d/\varepsilon)$ time. Note that this is not as trivial as simply computing $\langle q, K_i \rangle^p$ for each $K_i \in U$, since we also would like to compute the exact normalization factor in the attention matrix, and do not want to spend $n$ time to do so. For $p = 2$ this amounts to computing $\|Kq\|_2^2$, but by computing the SVD of $K$ in a preprocessing step, this quantity can be computed in only $O(d^2)$ time. For even integers $p > 2$, we can reduce to the case of $p = 2$ by using Khatri-Rao products on the rows of $K$, as well as Khatri-Rao products of the query with itself. One can further speed up these computations by approximating the heavy attention values using sketching. 

{\bf Extension to Streaming Environments.} It is also possible to compute the set $U$ in a streaming environment. We illustrate the idea for $f(x) = x^2$, though using the tensor trick discussed above, it also applies to $f(x) = x^p$ for any even integer $p$. The simplest algorithm is a two-pass algorithm over $K$, where we maintain the $d \times d$ matrix $K^T K$ by summing $n$ outer products as we read each of the $n$ keys. Then, in a second pass, the $i$-th leverage score is $K_i^T (K^T K)^{-1} K_i$, and we simply let $U$ be the set of keys for which the leverage score is at least $\varepsilon$. The total memory is only $O(d^2/\varepsilon)$. 

More interestingly, one can compute $U$ in a single pass over $K$ by again maintaining $K^TK$ but also storing all keys whose {\it online leverage score} \cite{cohen2016online} is at least $\varepsilon$. It is known that the online leverage scores are at least the leverage scores and sum to $O(d \log \kappa^{OL})$, where $\kappa^{OL}$ is the online condition number, which can be bounded by $n^{O(d)}$ assuming the entries of $K$ have $O(\log n)$ bits of precision. In this case, at the end of the stream, one simply evaluates $K_j^T (K^TK)^{-1} K_j$ for each key $K_j$ that was stored because it had a large online leverage score. The total memory is poly$(d/\varepsilon)$ and the algorithm is a single pass algorithm.

{\bf Extension to Distributed Environments.} It is possible to find $U$ in a distributed environment, where the keys are distributed across multiple servers. In this case if server $i$ holds an $n_i \times d$ matrix $K^i$, so that $K = [K^1; K^2; \ldots, K^s]$ if there are $s$ servers, then the $i$-th server can communicate $(K^i)^T K^i$ to the first server for each $i$, and the first server can add these up to compute $K^TK$ and send $K^T K$ to each server. Then the $i$-th server can send a set $U^i$ of keys $K_j$ that it holds for which $K_j^T (K^T K)^{-1} K_j \geq \varepsilon$. Then $U = \cup_j U_j$. Although our discussion has focused on $f(x) = x^2$, similar ideas exist for $f(x) = x^p$ for any $p$, as well as the functions studied in \cite{musco2022active}. 

{\bf Planted Model.} Finally, we formulate a planted model of keys and queries, where each query is a noisy linear combination of a small subset of keys. We show that in this model, we can find all keys relevant to a given query in sublinear time. We give deterministic conditions under which this is possible, and describe a natural stochastic setting where these conditions are realized. 

{\bf Experiments.} We perform experiments on pretrained ViT models to empirically understand the structure of the attention matrices that arise for typical inputs to a model. Our experiments suggest that for many attention heads, a small subset of \emph{universal} keys along with a set of local tokens capture a large fraction of attention weight for a large number of tokens. This motivates the identification problem that we study in this paper. 

We then evaluate the effectiveness of leverage score selection for the downstream task of image classification using the pretrained softmax model. In this experiment, at each attention head, we compute a subset of keys with the largest leverage scores and make the queries attend only to that subset of keys. We observe that while the accuracy of the model drops compared to the full attention model, the model still maintains non-trivial accuracy and that leverage score selection obtains better accuracy compared to e.g., row norm based selection, showing that leverage score based selection does compute a subset of important keys.

We also trained multiple ViT models from scratch using the leverage score based attention mechanism and observe that the model quality improves significantly compared to doing inference on the softmax pretrained models using the leverage score mechanism. Across all the models, we observe that the model quality achieves \textgreater 90\% accuracy of the full softmax attention while selecting the top 32 keys (out of 197 keys for L/16 and S/16 models and out of 785 keys for the L/8 model) using the leverage score mechanism at each attention head. On the negative side, we observe that models trained from scratch using squared row norm selection or even ``random key selection'', where we select a set of random key positions for each attention head and each query attends to only keys in those positions throughout the training/inference process, attains similar accuracies. However, unlike leverage score sampling, these latter methods do not have the ``universality'' property as discussed in the introduction for a relaxed definition of the attention mechanism. We leave it as an open question how to make effective use of leverage score information to train better models.

{\bf Notation:} For a matrix $B$, we use nnz$(B)$ to denote the number of non-zero entries of $B$. We let $\omega$ be the exponent of matrix multiplication, so that multiplying two $d \times d$ matrices takes $d^{\omega}$ time. 

%
%

\section{Connection to Leverage Scores}\label{sec:2}
We define the Generalized Attention Problem (GAP): let $n$, and $D \geq d$ be positive integers. Consider two known maps: $\psi, \phi \colon \mathbb{R}^d \to \mathbb{R}^D.$ We assume that $\phi$ and $\psi$ are computable in time $O(D)$. We are given matrices $Q$ and $K$, where $Q$ is $n \times d$ and $K$ is $n \times d$. GAP is the problem of computing the $n \times n$ matrix $A$ defined by $A_{ij} = \frac{\langle \psi(Q_{i}), \phi(K_{j}) \rangle^2}{\sum_{\ell=1}^n \langle \psi(Q_{i}),  \phi(K_{\ell}) \rangle^2},$ where $K_{j}$ denotes the $j$-th row of $K$.

\begin{remark}
One special case is the SoftMax operation. In this case the value $D$ is infinite, but Softmax can be approximated  with finite $D$, see, e.g., \cite{choromanski2021rethinking}. Note that $\phi = \psi$ in approximations to Softmax. Another set of interesting special cases occurs when $\phi$ and $\psi$ are polynomial maps, as in \cite{SSWZ23,KMZ24}. 
\end{remark}

GAP requires $\Omega(n^2)$ time to write down the entries of $A$. We show how to improve this by finding only the large entries of $A$, i.e., those that are at least $\varepsilon$. 

\begin{theorem}\label{thm:main}
There is a deterministic algorithm $\mathcal{A}$ that finds a subset $U$ of rows of $K$ (i.e., $U \subseteq [n]$) satisfying: $|U| \leq \frac{D}{\varepsilon} \quad \text{and} \quad \forall Q_{n \times d}, \forall i \in \{1, 2, \dots, n\} \colon \{j \colon A_{ij} \geq \varepsilon\} \subseteq U.$

\end{theorem}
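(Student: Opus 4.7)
The plan is to reduce the problem to leverage scores of a single, query-independent matrix. Let $M$ be the $n \times D$ matrix whose $j$-th row is $\phi(K_j)^\top$. For any row $Q_i$ of $Q$, set $q = \psi(Q_i) \in \mathbb{R}^D$; then by construction
\[
A_{ij} \;=\; \frac{\langle q,\, M_j\rangle^2}{\sum_{\ell=1}^n \langle q,\, M_\ell\rangle^2} \;=\; \frac{\langle q,\, M_j\rangle^2}{\|Mq\|_2^2},
\]
so every attention score is a Rayleigh-type quotient in $q$ of the $j$-th row of the fixed matrix $M$. The crucial observation is that this quotient is exactly the quantity whose supremum over $q$ defines the statistical leverage score $\sigma_j(M)$ of row $j$ of $M$.

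Now I would introduce $U := \{\, j \in [n] : \sigma_j(M) \ge \varepsilon\,\}$. The universality claim is then immediate: if $A_{ij} \ge \varepsilon$ for some query matrix $Q$ and some index $i$, then the specific $q = \psi(Q_i)$ witnesses $\sigma_j(M) \ge \varepsilon$, so $j \in U$. Note that $U$ depends only on $K$ (through $\phi$), not on $Q$, which is exactly the universality assertion of the theorem.

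To bound $|U|$, I would invoke the standard identity $\sum_{j=1}^n \sigma_j(M) = \mathrm{rank}(M) \le D$ (this is the trace of the orthogonal projector onto the column space of $M^\top$, i.e., of the ``hat matrix'' $M(M^\top M)^+ M^\top$). A simple averaging argument then gives $|U|\cdot \varepsilon \le \sum_{j \in U} \sigma_j(M) \le D$, so $|U| \le D/\varepsilon$.

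It remains to describe the deterministic algorithm $\mathcal{A}$: compute $\phi(K_j)$ for each $j$ to form $M$, form $M^\top M$, and compute a QR or SVD-based factorization to obtain $(M^\top M)^+$; then for each $j$ set $\sigma_j(M) = M_j^\top (M^\top M)^+ M_j$ and place $j$ in $U$ iff this value is at least $\varepsilon$. The only subtlety worth flagging is the rank-deficient case where $Mq$ could vanish, but this is handled cleanly by using the Moore--Penrose pseudoinverse and restricting $q$ to the row space of $M$ in the definition of $\sigma_j$; the Rayleigh-quotient identification above then goes through verbatim. The main conceptual step (and the only non-routine one) is recognizing that linearizing through $\psi, \phi$ turns GAP-scores into ordinary $\ell_2$ leverage scores of the key-image matrix $M$; once that reduction is in hand, both the size bound and the algorithm are standard.
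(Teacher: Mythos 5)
Your proposal is correct and follows essentially the same route as the paper: you linearize via $\phi,\psi$ to reduce to the $\ell_2$ case (the paper's Lemma~\ref{lem:wlog}), identify the supremum of the normalized score with the leverage score of the key-image matrix (Theorem~\ref{thm-0}), and bound $|U|$ by the fact that leverage scores sum to at most $D$. Your explicit treatment of the rank-deficient case via the pseudoinverse is a minor refinement the paper leaves implicit.
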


We show that it suffices to prove this for the special case $D = d$ and $\phi$, $\psi$ are the identity maps.

\begin{lemma}\label{lem:wlog}
Without loss of generality, we can assume in Theorem 1 that $D = d$ and $\phi$, $\psi$ are both the identity map (i.e., $\phi(x) = \psi(x) = x$).
\end{lemma}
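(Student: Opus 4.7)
The plan is a simple change of variables: lift the rows of $Q$ and $K$ through $\psi$ and $\phi$ into $\mathbb{R}^D$, treat those lifted rows as the new query/key matrices, and then invoke the special case of Theorem~\ref{thm:main} (with its stated $D = d$, identity‑maps restriction) applied in dimension $D$. Concretely, given the GAP instance with maps $\psi,\phi\colon\mathbb{R}^d\to\mathbb{R}^D$ and matrices $Q,K$, I would define the $n\times D$ matrix $\widetilde K$ whose $j$‑th row is $\phi(K_j)$. Since $\widetilde K$ is an arbitrary $n\times D$ matrix, the special case of the theorem (with ``$d$'' replaced by $D$) produces a set $U\subseteq[n]$ with $|U|\le D/\varepsilon$ such that for every $n\times D$ matrix $\widetilde Q$, every index $j$ with $\widetilde A_{ij}:=\langle \widetilde Q_i,\widetilde K_j\rangle^2/\sum_{\ell}\langle \widetilde Q_i,\widetilde K_{\ell}\rangle^2\ge \varepsilon$ lies in $U$.

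Next I would observe that for any GAP query matrix $Q$, setting $\widetilde Q_i:=\psi(Q_i)$ yields $\widetilde A_{ij} = A_{ij}$ by definition, since the inner products $\langle\psi(Q_i),\phi(K_j)\rangle$ depend only on the images of the maps. Hence the large entries in row $i$ of $A$ are exactly the large entries in row $i$ of $\widetilde A$, which by the previous paragraph are contained in $U$. The size bound $|U|\le D/\varepsilon$ from the special case is exactly what Theorem~\ref{thm:main} asserts, so the reduction is lossless.

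There is no serious obstacle; the only thing to be careful about is the direction of the quantifiers. The special case gives a $U$ that works uniformly over \emph{all} $n\times D$ matrices $\widetilde Q$, while the original problem only requires validity over the (possibly smaller) family $\{\widetilde Q : \widetilde Q_i\in\psi(\mathbb{R}^d)\text{ for all }i\}$. Since a universal guarantee over the larger family trivially implies one over the smaller, the reduction goes through, and we have reduced Theorem~\ref{thm:main} to its $D=d$, identity‑map instance as claimed.
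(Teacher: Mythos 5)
Your proof is correct and follows the same kernel-type reduction as the paper: lift the rows of $K$ and $Q$ through $\phi$ and $\psi$ to form $n\times D$ matrices, observe the resulting attention matrix equals $A$, and apply the identity-map case in dimension $D$. Your extra remark about the quantifier direction (the universal set works over all $n\times D$ query matrices, hence in particular over those in the image of $\psi$) is a point the paper leaves implicit but is handled correctly.
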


\begin{proof}
The proof follows from a kernel-type construction. From $K$, define an $n \times D$ matrix $\widehat{K}$ given by $\widehat{K}_{j} = \phi(K_{j}).$ For any $n \times d$ matrix $Q$, define an $n \times D$ matrix $\widetilde{Q}$ by $\widetilde{Q}_{j} = \psi(Q_{j}).$ Further, define an $n \times n$ matrix $\widehat{A}$ by:
$\widehat{A}_{ij} = \frac{ \langle \widetilde{Q}_{i} ,  \widehat{K}_{j} \rangle^2}{\sum_{\ell=1}^n \langle \widetilde{Q}_{i}, \widehat{K}_{\ell} \rangle^2}.$
It is easy to see that $\widehat{A} = A$. Assuming that Theorem \ref{thm:main} holds when $D = d$ and $\phi$, $\psi$ are both the identity map, we can find a $\widehat{U}$ with $\{j \colon \widehat{A}_{ij} \geq \varepsilon\} \subseteq \widehat{U}$. Since $\widehat{A} = A$, we get that $\{j \colon A_{ij} \geq \varepsilon\} \subseteq \widehat{U}$, proving the lemma.
\end{proof}

Restricting to the case when $D = d$ and $\phi(x) = \psi(x) = x$, we formally define the set $LA$ for large attentions scores:
$LA(K, \varepsilon, v) = \{j \colon \langle K_{j}, v \rangle^2 \geq \varepsilon \sum_{\ell=1}^n \langle K_{\ell}, v \rangle^2\}$ for $v \in \mathbb{R}^d.$ Let
$U(K, \varepsilon) = \bigcup_{v \in \mathbb{R}^d \setminus \{0\}} LA(K, \varepsilon, v).$
The following is a simple characterization of $U(K, \varepsilon)$.

\begin{theorem}\label{thm-0}
$U(K, \varepsilon)$ is the set of rows $j$ of $K$ with leverage score at least $\varepsilon$. Also, $|U(K, \varepsilon)| \leq \frac{d}{\varepsilon}.$
\end{theorem}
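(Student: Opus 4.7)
The plan is to identify $U(K,\varepsilon)$ with the super-level set $\{j : \tau_j(K) \geq \varepsilon\}$ of the leverage scores of $K$, and then deduce the size bound from the fact that $\sum_j \tau_j(K) = \mathrm{rank}(K) \leq d$.

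First, I would recall the variational characterization of the $j$-th leverage score,
\[
\tau_j(K) \;=\; \sup_{v \in \mathbb{R}^d,\ Kv \neq 0} \frac{\langle K_j, v\rangle^2}{\sum_{\ell=1}^n \langle K_\ell, v\rangle^2} \;=\; K_j^T (K^T K)^+ K_j,
\]
with the supremum attained at $v^\star = (K^TK)^+ K_j$. (In the rank-deficient case one uses the pseudo-inverse and notes that $K_j$ necessarily lies in the row span of $K$, since $K_j$ itself is a row of $K$.) This is the standard fact that the leverage scores are the diagonal entries of the orthogonal projector onto the column span of $K$, and I would quote it rather than reprove it.

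Second, I would show the two inclusions. If $j \in U(K,\varepsilon)$, then by definition there is some $v \neq 0$ with $\langle K_j, v\rangle^2 \geq \varepsilon \sum_\ell \langle K_\ell, v\rangle^2$; in particular $Kv \neq 0$, and dividing gives $\tau_j(K) \geq \varepsilon$. Conversely, if $\tau_j(K) \geq \varepsilon$, plugging $v = v^\star$ into the definition of $LA(K,\varepsilon,v)$ yields $\langle K_j, v^\star\rangle^2 = \tau_j(K) \cdot \sum_\ell \langle K_\ell, v^\star\rangle^2 \geq \varepsilon \sum_\ell \langle K_\ell, v^\star\rangle^2$, so $j \in LA(K,\varepsilon, v^\star) \subseteq U(K,\varepsilon)$.

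Third, I would conclude the size bound. Using $\sum_{j=1}^n \tau_j(K) = \mathrm{rank}(K) \leq d$, a one-line Markov-type argument gives $|\{j : \tau_j(K) \geq \varepsilon\}| \leq d/\varepsilon$, which combined with the characterization above yields $|U(K,\varepsilon)| \leq d/\varepsilon$.

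The only step that requires any care is the variational formula in the rank-deficient case; everything else is essentially unwinding definitions. Since the formula and the sum bound for leverage scores are standard in randomized numerical linear algebra (and have already been invoked earlier in the paper), I do not anticipate a real obstacle here.
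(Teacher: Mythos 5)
Your proposal is correct and follows essentially the same route as the paper: identify $U(K,\varepsilon)$ with the super-level set of the leverage scores via the variational characterization, then bound its size by the fact that the leverage scores sum to $\mathrm{rank}(K)\leq d$. If anything, you are slightly more careful than the paper, which implicitly assumes the supremum defining the sensitivity is attained in the converse direction; your explicit witness $v^\star=(K^TK)^+K_j$ closes that small gap.
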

\begin{proof}
Observe that $j \in U(k,\varepsilon)$ if and only if there is a non-zero vector $v$ for which $\frac{\langle K_j, v \rangle ^2}{\sum_{\ell = 1}^m \langle K_{\ell}, v \rangle^2} \geq \varepsilon$, which means that the the $j$-th $f$-sensitivity $\sigma_j^f(K) \geq \varepsilon$. Conversely, if $\sigma_j^f(K) \geq \varepsilon$, then there exists a non-zero vector $v$ for which $\frac{\langle K_j, v \rangle^2}{\sum_{\ell= 1}^m \langle K_{\ell} ,v \rangle^2} \geq \varepsilon$, and so $j \in U(K, \varepsilon)$. It is a well-known fact that for $f(x) = x^2$ that $\sigma_j^f(K)$ is precisely the $j$-th leverage score of the matrix $K$, see, e.g., the solution to problem 2.2 in \cite{HW}. Note also that $j \in U(K, \varepsilon) \implies \tau(K, j) \geq \varepsilon \implies LS(K, j) \geq \varepsilon$. As the sum of leverage scores is equal to $d$ (see, e.g., \cite{Foster1953OnTS}), we have 
$|U(K, \varepsilon)| \leq \frac{d}{\varepsilon}$.
\end{proof}
Thus, to compute $LA$, we simply need to compute the rows $j$ of $K$ with leverage score at least $\varepsilon$. Theorem \ref{thm:main} now follows from Lemma \ref{lem:wlog} and Theorem \ref{thm-0}.

{\bf Streaming Algorithms.} The $j$-th leverage score of $K$ is equal to $K_j^T (K^T K)^{-1} K_j$ (see, e.g., \cite{mahoney2011randomized,woodruff2014sketching}), and so a simple $2$-pass streaming algorithm using $O(d^2)$ memory would be to maintain $K^T K$ as a sum of $n$ outer products in a first pass over the keys of $K$, and then to compute $K_j^T (K^T K)^{-1} K_j$ exactly in the second pass over keys $K_j$, and store those $K_j$ for which this quantity is at least $\varepsilon$. This simple $2$-pass streaming algorithm uses $O(d^2)$ words of memory and can be implemented in $O(nd^2 + d^{\omega})$ time. 

One can obtain a $1$-pass algorithm to compute the rows $j$ of $K$ with leverage score at least $\varepsilon$ with only slightly more memory. The idea is to store the set $S$ of rows $K_j$ with {\it online leverage score} \cite{cohen2016online} at least $\varepsilon$ in the first pass, and to also store $K^T K$. The $j$-th online leverage score is equal to $K_j^T ((K^{j-1})^T (K^{j-1}))^{-1} K_j$, where $K^{j-1}$ denotes the prefix of the first $j-1$ rows of $K$. The $j$-th online leverage score is at least the $j$-th leverage score since sensitivities cannot increase as more rows are added to $K$, but more interestingly, Theorem 2.2 of \cite{cohen2016online} shows that the sum of online leverage scores is bounded by $O(d \log \kappa^{OL})$, where $\kappa^{OL}$ is the online condition number, see Lemma 3.3 of \cite{woodruff2022onlinelewisweightsampling}. It is well-known for matrices with integer entries bounded by poly$(n)$, which is an assumption often used to obtain meaningful memory bounds in a data stream, that the online condition number is at most $n^{O(d)}$ (see, e.g., Lemma 4.1 of \cite{clarkson2009numerical} which lower bounds the minimum non-zero singular value of an $n \times d$ such matrix by $n^{-O(d)}$), and thus the number of rows stored will be at most $O(d^2 \log n)$, and so the memory is bounded by $O(d^3 \log n)$ words. One can also maintain $K^T K$ in the stream as before. At the end of the stream, for each row $K_j$ stored which had online leverage score at least $\varepsilon$, one can compute its exact leverage score $K_j^T (K^T K)^{-1} K_j$ at the end of the stream in order to exactly find LA. Overall this is a 1-pass algorithm with memory poly$(d \log n)$ words. 

{\bf Finding all Heavy Attentions for a Query.} Given a query $q$, for $f(x) = x^2$ we can compute all large attention score values it participates in {\it exactly} because we can first preprocess $K$ in $O(nd^2)$ time to compute its singular value decomposition (SVD) $K = U' \Sigma V^T$. Then, given a query $q$, we can compute $\langle q, K_i \rangle^2$ for each $K_i \in U$ in $|U| \cdot d = O(d^2/\varepsilon)$ time, and we can also compute the normalization $\|K q\|_2^2 = \|\Sigma V^T q\|_2^2$ in $O(d^2)$ time since $\Sigma V^T$ is a $d \times d$ matrix. Thus, after an initial preprocessing of $n \cdot \textrm{poly}(d/\varepsilon)$ time, we can compute all large attention score values involving a query exactly and in only poly$(d/\varepsilon)$ time. Note that if one would like to instead approximate the large attention values up to a $1+\varepsilon$ factor, instead of computing the SVD of $K$, one can use a random sketching matrix $S$ so that $\|SKq\|_2^2 = (1 \pm \varepsilon) \|Kq\|_2^2$. If one uses the Subsampled Randomized Hadamard Transform for example, then this improves the $O(nd^2)$ time required to compute the SVD of $K$ to only $nd \cdot \textrm{poly}(\log n/\varepsilon)$ time and incurring $1/\textrm{poly}(n)$ failure probability, e.g., using the analysis of \cite{cnw16}. It is also possible to use CountSketch in $O(nd + \textrm{poly}(d))$ time with a $1/\textrm{poly}(d)$ failure probability \cite{clarkson2013low}.

This section was for $f(x) = x^2$, while the next section handles $f(x) = |x|^p$ for any $p \geq 1$. 

\section{Attention Scores Using \texorpdfstring{$f(x) = |x|^p$}{f(x)=abs(x)-power-p}}\label{sec:3}
We next consider $f$-sensitivities for $f(x) = |x|^p$.
\begin{theorem}
Let $K$ be an $n \times d$ matrix. Let $p \in [1, \infty)$ and let $f(x) = |x|^p$. 
There exists a set $S$ of rows of $K$ containing all $f$-sensitivities at least $\varepsilon$ with the following properties:
\begin{enumerate}
    \item For $1 \leq p \leq 2$, $|S| = O(d/\varepsilon)$.
    \item For $p > 2$, $|S| = O(d^{p/2}/\varepsilon)$.
\end{enumerate}
Moreover, $S$ can be computed in $(\textrm{nnz}(K) + \text{poly}(d/\varepsilon) \textrm{poly}(\log n)$ time.
\end{theorem}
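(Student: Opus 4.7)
The plan is to combine two standard ingredients from randomized numerical linear algebra: the relationship between $\ell_p$-sensitivities and $\ell_p$ Lewis weights, and a fast approximate Lewis-weight algorithm. I will treat the cardinality bound and the algorithmic claim separately, then assemble them.

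For the cardinality of $S$, I would reuse the Markov-type argument already given in the proof sketch of Theorem~\ref{thm:main-intro}: the number of rows with $\sigma_i^f(K) \geq \varepsilon$ is at most $\Psi^f/\varepsilon$. For $f(x) = |x|^p$, the total sensitivity $\Psi^f$ is bounded by $d$ when $1\le p \le 2$ and by $d^{p/2}$ when $p > 2$, which are the classical total-sensitivity bounds from $\ell_p$ Lewis-weight theory (cited in the introduction; see Section~3.3 of \cite{clarkson2019dimensionality} and \cite{Cohen2015}). These bounds immediately give $|S| = O(d/\varepsilon)$ in the first regime and $|S| = O(d^{p/2}/\varepsilon)$ in the second.

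For the algorithmic part, the plan is to compute constant-factor approximations $\tilde w_i$ of the $\ell_p$ Lewis weights $w_i$ of $K$ using the algorithm of \cite{Cohen2015} (with subsequent refinements for general $p$), which runs in $\mathrm{nnz}(K) + \mathrm{poly}(d/\varepsilon)\,\mathrm{poly}(\log n)$ time. Since $\ell_p$ Lewis weights majorize the $\ell_p$-sensitivities up to a known factor depending only on $p$ and $d$, thresholding $\tilde w_i$ at the appropriate level produces a superset of the rows with $\sigma_i^f(K) \geq \varepsilon$. Concretely, for $1 \le p \le 2$ one has $\sigma_i^f(K) \lesssim w_i$, so keeping all $i$ with $\tilde w_i \ge \Omega(\varepsilon)$ yields a set of size $O(d/\varepsilon)$ because $\sum_i w_i = d$. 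For $p > 2$, the corresponding inequality takes the form $\sigma_i^f(K) \lesssim d^{p/2-1}\, w_i$, so thresholding at $\tilde w_i \ge \Omega(\varepsilon/d^{p/2-1})$ yields a set of size $O(d^{p/2}/\varepsilon)$. Output $S$ equal to this thresholded set of rows.

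The main obstacle I anticipate is nailing down the right sensitivity-vs.-Lewis-weight inequality in the $p > 2$ regime, so that the threshold on $\tilde w_i$ is simultaneously (i) small enough to catch every row with $\sigma_i^f(K) \ge \varepsilon$ and (ii) large enough that the resulting set has size $O(d^{p/2}/\varepsilon)$. The $p \in [1,2]$ case is essentially routine since Lewis weights dominate sensitivities with no auxiliary factor. For $p > 2$ the bookkeeping is tighter; the cleanest route I see is either to cite the explicit sensitivity bound via Lewis weights from \cite{Cohen2015}, or to rescale $K$ by a diagonal matrix of $w_i^{1/2 - 1/p}$ to reduce the problem to a leverage-score calculation on an appropriately reweighted matrix, then threshold as in Theorem~\ref{thm-0}. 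Once this inequality is in hand, the runtime follows immediately from the Lewis-weight algorithm, and combining the approximation factor of $\tilde w_i$ with the threshold only affects hidden constants.
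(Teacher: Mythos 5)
Your proposal is correct and follows essentially the same route as the paper: compute constant-factor approximations to the $\ell_p$ Lewis weights via \cite{Cohen2015}, use the bounds $\sigma_i^f \le \tau_i$ for $1 \le p \le 2$ and $\sigma_i^f \le d^{p/2-1}\tau_i$ for $p > 2$ together with $\sum_i \tau_i \le d$, and threshold at $\Omega(\varepsilon)$ (resp.\ $\Omega(\varepsilon/d^{p/2-1})$) to get the stated set sizes and runtime. The step you flagged as a potential obstacle for $p>2$ is handled in the paper exactly by citing that sensitivity-vs.-Lewis-weight inequality (Section~3.3 of \cite{clarkson2019dimensionality}), so no further work is needed.
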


\begin{proof}
Fix $p \geq 1$ and let $f(x) = |x|^p$. 
Let $\tau_i$ denote the $\ell_p$-Lewis weights of $K$ \citep{Cohen2015}. These can all be computed up to a constant factor in total time $(\textrm{nnz}(K) + d^\omega) \textrm{poly}(\log n)$ time \citep{Cohen2015}.

For $1 \leq p \leq 2$, it is known that $\tau_i$ upper bounds the $i$-th $f$-sensitivity $\sigma^f_i$. For $p > 2$, it is known that $d^{p/2-1} \tau_i$ upper bounds the $i$-th $f$-sensitivity. The $\ell_p$-Lewis weights sum to at most $d$. For an exposition of these statements, see e.g., Section 3.3 of \cite{clarkson2019dimensionality} and references therein.

Therefore, for $1 \leq p \leq 2$, the set $S$ of all rows $i$ with $\tau_i \geq \varepsilon$ satisfies the desired properties. For $p > 2$, the set $S$ of all rows $i$ with $d^{p/2-1} \tau_i \geq \varepsilon$ satisfies the desired properties. By approximating the $\tau_i$ up to a factor of $2$ and including all $i$ whose approximate $\tau_i$ is at least $\varepsilon/2$ for $1 \leq p \leq 2$, while including all $i$ whose approximate $\tau_i$ is at least $\varepsilon/(2 d^{p/2-1})$ for $p \geq 2$, we will include all rows $i$ whose $f$-sensitivity is at least $\varepsilon$. Further, if the approximate $\tau_i$ is at least $\varepsilon/2$, then the actual $\tau_i$ is at least $\varepsilon/4$, and so we have $|S| = O(d/\varepsilon)$ for $1 \leq p \leq 2$ while $|S| = O(d^{p/2}/\varepsilon)$ for $p > 2$. 
\end{proof}

We next show how to efficiently approximate the normalization term for all $1 \leq p < \infty$, in order to approximate all heavy attentions for a given query $q$ by using the set $U$. We will later show how to compute the normalization term exactly for $p$ an even integer.

 \begin{theorem}
Let $Q$ be an $n \times d$ matrix and $K$ be an $n \times d$ matrix. For $1 \leq p < \infty$ and any $j \in \{1, \dots, n\}$, the normalization term $\sum_{i=1}^n |\langle Q_j, K_i \rangle|^p$
can be estimated efficiently. Specifically, there exists a sampling and rescaling matrix $S$ with the following properties:
\begin{enumerate}
    \item For $1 \leq p \leq 2$, $S$ has $d \textrm{polylog}(d/\varepsilon) /\varepsilon^2$ columns, 
    \item For $p > 2$, $S$ has $d^{p/2} \textrm{polylog}/\varepsilon^2$ columns, 
    \item With failure probability $\frac{1}{\textrm{poly}(d)}$, simultaneously for all $x \in \mathbb{R}^d$,  $\|xK^TS\|_p^p = (1 \pm \varepsilon)||xK^T||_p^p$.
\end{enumerate}
$S$ can be computed in $(\textrm{nnz}(K) + \text{poly}(d/\varepsilon)) \textrm{poly}(\log n)$ time, and after this one-time computation,  $\|xK^TS\|_p^p$ can be computed in $\textrm{poly}(d/\varepsilon)$ time.
\end{theorem}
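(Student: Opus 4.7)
The plan is to construct $S$ as an $\ell_p$-Lewis-weight sampling and rescaling matrix and then invoke the standard $\ell_p$-subspace embedding guarantee for such a sample. Observe first that since $Q_j \in \mathbb{R}^d$, the vector $(Q_j) K^T \in \mathbb{R}^n$ has $i$-th coordinate $\langle Q_j, K_i\rangle$, so the normalization term is exactly $\|Q_j K^T\|_p^p$. If $S \in \mathbb{R}^{n \times m}$ satisfies the uniform guarantee $\|x K^T S\|_p^p = (1\pm\varepsilon)\|x K^T\|_p^p$ for all $x \in \mathbb{R}^d$, then plugging in $x = Q_j$ estimates the desired sum. Thus the whole task reduces to producing a sampling-and-rescaling $S$ that is a $(1\pm\varepsilon)$-subspace embedding for the column space of $K^T$ (equivalently, for the row space of $K$) in the $\ell_p$ norm.

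To build $S$, first compute constant-factor approximations $\widetilde{\tau}_i$ to the $\ell_p$-Lewis weights of $K$ using the algorithm of \cite{Cohen2015}, which runs in $(\textrm{nnz}(K) + d^{\omega})\textrm{polylog}(n)$ time. Let $T = \sum_i \widetilde{\tau}_i = O(d)$. Then, for a target sample count $m$ to be chosen below, define independent sampling probabilities $p_i = \min(1,\, m \widetilde{\tau}_i/T)$, include row $i$ with probability $p_i$, and rescale the included row by $p_i^{-1/p}$. Formally, $S$ is the $n\times m'$ matrix with one column per sampled index, where that column has $p_i^{-1/p}$ on the sampled coordinate and zeros elsewhere; here $m' = \Theta(m)$ in expectation and with high probability. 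Computing $S$ given the approximate weights takes only $O(n)$ additional time, and after $S$ is fixed we form $K^TS$ once in time $O(d \cdot m')$, yielding a $d \times m'$ matrix. Afterward $\|xK^T S\|_p^p$ is computed as $\|(x)(K^TS)\|_p^p$ in $O(dm')$ time, which is $\textrm{poly}(d/\varepsilon)$.

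For correctness, I would invoke the $\ell_p$-Lewis-weight sampling theorem of \cite{Cohen2015} (with the improvements of subsequent works for the $p>2$ regime): sampling with probabilities proportional to the $\ell_p$-Lewis weights and rescaling by $p_i^{-1/p}$ yields, with probability $1-1/\textrm{poly}(d)$, a simultaneous $(1\pm\varepsilon)$ $\ell_p$-subspace embedding for the $d$-dimensional row space of $K$, provided $m = \Theta(d\, \textrm{polylog}(d/\varepsilon)/\varepsilon^2)$ for $1\le p\le 2$ and $m = \Theta(d^{p/2}\, \textrm{polylog}(d/\varepsilon)/\varepsilon^2)$ for $p>2$. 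This matches the column counts stated in the theorem. Applying the uniform guarantee to $x = Q_j$ gives the claimed $(1\pm\varepsilon)$ approximation to the normalization term; the failure probability $1/\textrm{poly}(d)$ is inherited directly.

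The main obstacle is the $p > 2$ case: here pointwise concentration (via Bernstein/Khintchine) costs an extra $d^{p/2-1}$ factor because, after Lewis rescaling, the worst-row mass can still be $\Theta(d^{p/2-1}/m)$, and the standard net/chaining argument requires the number of samples to dominate this worst-case variance. This is exactly why we need $m = \tilde{O}(d^{p/2}/\varepsilon^2)$ rather than $\tilde{O}(d/\varepsilon^2)$, and it is the only place where the proof for $1\le p\le 2$ and $p>2$ genuinely diverges; I would cite the known Lewis-weight subspace-embedding bounds rather than re-derive this concentration. The remaining runtime bookkeeping (Lewis-weight computation plus a single $d\times m$ matrix materialization) is routine and accounts for the stated $(\textrm{nnz}(K) + \textrm{poly}(d/\varepsilon))\,\textrm{polylog}(n)$ preprocessing cost.
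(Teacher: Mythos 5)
Your proposal is correct and follows essentially the same route as the paper: rewrite the normalization term as $\|Q_j K^T\|_p^p$, build $S$ by $\ell_p$-Lewis-weight sampling and rescaling so that it is a $(1\pm\varepsilon)$ $\ell_p$-subspace embedding for the row space of $K$ (citing \cite{Cohen2015} and follow-up work for the sample-count bounds in the two regimes $1\le p\le 2$ and $p>2$), and then specialize to $x = Q_j$. You simply spell out the sampling probabilities, rescaling, and runtime bookkeeping in more detail than the paper, which defers all of this to the cited Lewis-weight sampling results.
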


\begin{proof}
The normalization term can be written as $ \| Q_j K^T \|_p^p$.  We can obtain the matrix $S$ by Lewis weight sampling on $B$. This is a well-known technique for constructing subspace embeddings that preserve $\ell_p$ norms simultaneously for all $x$ \cite{Cohen2015}.  Specifically, by \cite{woodruff2022onlinelewisweightsampling}, there exists an algorithm that computes a sampling and rescaling matrix $S$ with the stated number of columns such that for all $x \in \mathbb{R}^d$, $ \| xK^TS \|_p^p = (1 \pm \varepsilon) \| xK^T \|_p^p$. The time to compute $S$ is $O(\textrm{nnz}(K) + \text{poly}(d/\varepsilon)) \textrm{poly}(\log n)$. Once $S$ is computed, we can efficiently estimate the normalization term for any $j$ as $\|Q_jK^TS\|_p^p$. Since $S$ has $\text{poly}(d/\varepsilon)$ columns, computing this term takes $\text{poly}(d/\varepsilon)$ time.
\end{proof}

{\bf Finding all Heavy Attentions Exactly for a Query $q$.} Although the above procedure allows for quickly estimating the normalization term up to a $(1+\varepsilon)$-factor, one may be interested in the exact value of the attention score if the model is sensitive to slight perturbations. One can also do this with a slightly larger time complexity for even integers $p > 2$, as the idea for $p = 2$ for computing heavy attention scores exactly extends to $f(x) = x^p$ for even integers $p$ using a tensor trick. Indeed, we can form the $n \times d^{p/2}$ matrix $K'$ by taking the Khatri-Rao product of each row of $K$ with itself $p/2$ times. Then we compute the SVD $K' = U' \Sigma V^T$. Given a query $q$, we can again compute $\langle q, K_i \rangle^p$ for each $K_i \in U$ in $|U| \cdot d = O(d^{p/2+1}/\varepsilon)$ time, and now we can also compute the normalization $\sum_j \langle K_j, q \rangle^p = \|K' q'\|_2^2 = \|\Sigma V^T q'\|_2^2$, where $q'$ is the Khatri-Rao product of $q$ with itself $p/2$ times. Thus, for constant even integers $p$, after an initial preprocessing of $n \cdot \textrm{poly}(d/\varepsilon)$ time, we can compute all large attentions for any particular $q$ exactly in only poly$(d/\varepsilon)$ time. 

\begin{remark}
The choice of $p$ in the definition of $f$-sensitivity can significantly impact the identification of large attention scores. Consider the following examples:

\noindent \textbf{Case $p>2$}: Suppose one entry in a row of the attention matrix is $n^{1/p}$, while all other entries are $\Theta(1)$. This entry is a large attention with constant $\varepsilon$ for $f(x) = |x|^p$, but it is not a large attention score for $f(x) = x^2$ unless $\varepsilon$ is inverse polynomial in $n$.

\noindent \textbf{Case $p<2$}: Suppose one entry in a row is $\varepsilon$, another entry is $1$, and the remaining entries are close to $0$. The entry with value $\varepsilon$ is a large attention score for $f(x) = x^2$ with value $\varepsilon^2$, but it is a large attention score for $f(x) = |x|$ with value $\varepsilon$. Thus, using $\ell_1$-sensitivities allows for faster identification and allows for storing a smaller set of large sensitivity rows of $K$.
\end{remark}

\begin{remark}
  In \cite{musco2022active},  a method for efficiently computing $f$-sensitivities with respect to a generic function $f$ is presented. This function $f$ is assumed to satisfy basic properties such as subadditivity and symmetry. Examples include the Huber and Tukey loss functions. \cite{musco2022active} show that the sum of these sensitivities is $O(d \log n)$, implying the existence of a subspace embedding $S$ with $\text{poly}(d)$ columns to facilitate fast computation of the normalization factor.  Furthermore, the universal set $U$ of rows of $K$ now has size  $O(d\log n / \epsilon)$ and both $U$ and $S$ can be found in $O(\textrm{nnz}(K) + \text{poly}(d)) \textrm{poly}(\log n)$ time. These generalized loss functions may offer more flexibility and different efficiency versus accuracy tradeoffs. 
\end{remark}

\section{A Planted Model}
We formulate a planted model of keys and queries, where, each query is a noisy liner combination of a small set of keys; we call this set of keys, the ``relevant keys for the query''. We show that our algorithm, based on finding large attention scores, given $K$ and a query satisfying model assumptions, finds the relevant set of keys. For a plausible set of model parameters, the running time of our algorithm per query, amortized over many queries, is sublinear in $n$. Throughout, $K$ will be an $n\times d$ matrix. $K_{j}$ denotes the $1\times d$ vector (the $j$-th row of $K$).
Each row is a key. $q$ is a $1\times d$ vector, and will denote a generic query.
We assume there is a subset $S$ of keys such that for $i\in S$, the correlation of  $K_{i\cdot}$ to other keys is upper bounded. We will assume later that query vectors are convex combinations of keys in $S$. We let $\delta_1, \delta_2$ be parameters satisfying
$0<\delta_2\leq \delta _1\leq 1/4$.
We assume there is a subset $S$ of keys satisfying:
\begin{align}
\forall j,\ell\in S,\, j\not= \ell,\, |K_{j}K_{\ell}^T|&\leq \delta_1 \cdot \min(\|K_{j}\|_2^2,\|K_{\ell}\|_2^2)\label{delta1}\\
\forall j\in S,\ell\notin S, |K_{\ell}K_{j}^T|&\leq \delta_2 \cdot \min(\|K_{j}\|_2^2,\|K_{\ell }\|_2^2)\label{delta2}
\end{align}

\begin{remark}
$S$ may be thought of as a ``stand out'' subset of keys, since the correlation of $j\in S$ to other keys is upper bounded.
    \end{remark}

We first argue that elements of $S$ have high self-attention scores.

\begin{lemma}\label{S-attention}
Let $A=KK^T$.
    $$\forall i\in S, \quad
    (A_{ii})^2\geq \left( \sum_{j=1}^n(A_{ij})^2\right) \frac{1}{1+\delta_1^2|S|+\delta_2^2n}.  $$
\end{lemma}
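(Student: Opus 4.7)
The plan is to exploit the definition $A = KK^T$ so that $A_{ij} = \langle K_i, K_j\rangle$ and in particular $A_{ii} = \|K_i\|_2^2$, and then split the sum $\sum_{j=1}^n (A_{ij})^2$ into three groups: the diagonal term $j=i$, the off-diagonal terms with $j\in S$, and the terms with $j\notin S$. The two hypotheses \eqref{delta1} and \eqref{delta2} are tailor-made for exactly this split, once we use the crude bound $\min(\|K_i\|_2^2,\|K_j\|_2^2)\leq \|K_i\|_2^2$.

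Concretely, for $i\in S$ fixed, I would argue: for each $j\in S$ with $j\neq i$, hypothesis \eqref{delta1} yields $|A_{ij}|\leq \delta_1\|K_i\|_2^2=\delta_1 A_{ii}$, and for each $j\notin S$, hypothesis \eqref{delta2} yields $|A_{ij}|\leq \delta_2\|K_i\|_2^2=\delta_2 A_{ii}$. Squaring and summing then gives
\[
\sum_{j=1}^n (A_{ij})^2 \;=\; (A_{ii})^2 + \sum_{\substack{j\in S\\ j\neq i}}(A_{ij})^2 + \sum_{j\notin S}(A_{ij})^2 \;\leq\; (A_{ii})^2\bigl(1+(|S|-1)\delta_1^2+(n-|S|)\delta_2^2\bigr),
\]
and bounding the parenthesized factor by $1+\delta_1^2|S|+\delta_2^2 n$ and rearranging produces the claimed inequality.

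There is essentially no obstacle: the whole argument is a direct substitution of the two assumed correlation bounds into the sum, followed by a one-line rearrangement. The only mild subtleties are (i) remembering to use $\|K_i\|_2^2$ (rather than $\|K_j\|_2^2$) as the common upper bound when invoking $\min(\|K_i\|_2^2,\|K_j\|_2^2)$, which is valid because the minimum is at most either argument; and (ii) being slightly wasteful by replacing $|S|-1$ with $|S|$ and $n-|S|$ with $n$ so that the final denominator matches the statement exactly. Neither step requires any estimate beyond the hypotheses of the lemma.
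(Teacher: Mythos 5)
Your proposal is correct and follows exactly the paper's own argument: compute $(A_{ii})^2=\|K_i\|_2^4$, split $\sum_j (A_{ij})^2$ into the diagonal term, the off-diagonal terms in $S$, and the terms outside $S$, and bound the latter two groups by $\delta_1^2(|S|-1)\|K_i\|_2^4$ and $\delta_2^2 n\|K_i\|_2^4$ using \eqref{delta1} and \eqref{delta2}. The two minor slack steps you flag (using $\|K_i\|_2^2$ as the common bound and relaxing $|S|-1$ to $|S|$ and $n-|S|$ to $n$) are exactly what the paper does as well.
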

\begin{proof}
    \begin{equation}\label{721}
        (A_{ii})^2=(K_{i}K_{i}^T)^2=\|K_{i}\|_2^4.\end{equation}
\begin{align}\label{722}
    &\sum_j (A_{ij})^2=|K_{i}K_{i}^T|^2+\sum_{\ell\in S\setminus i}(K_{\ell}K_{i}^T)^2+\sum_{\ell\notin S}(K_{\ell}K_{i}^T)^2\nonumber\\
    &\leq \|K_{i}\|_2^4\left( 1+\delta_1^2(|S|-1)+\delta_2^2n\right),
\end{align}
using (\ref{delta1}) and (\ref{delta2}). Now, (\ref{721}) and (\ref{722}) together imply the lemma.
\end{proof}

Note that $\|K_{i}\|_2$ canceled out because of the normalization. This demonstrates the effectiveness  of row normalization, which is a non-linear operation.




\subsection{Assumption on queries}


There is an unknown subset $S(q)$ \footnote{$S(q)$ is the set of keys relevant to query $q$.}
of $S$ and unknown weights $w_j(q)$ for $j\in S(q)$ satisfying
\begin{equation}\label{w-bounds}
    \sum_{j\in S(q)}w_j(q)\leq 1 \mbox{ and } w_j(q)\geq 4\delta_1\ \text{for all}\ j\in S(q)
\end{equation}
and an unknown $d$-dimensional vector $z(q)$ \footnote{ $z(q)$ can be thought of as noise.}
with \begin{equation}\label{z-small}
|z(q)K_{\ell}^T|\leq \frac{\delta_1}{4}\|K_{\ell}\|_2^2\ \text{for all}\ \ell\in [n],    
\end{equation}
with
\begin{equation}\label{q-equation}
  q=\sum_{j\in S(q)}w_jK_{j}+z(q).  
\end{equation}
We will just abbreviate $w_j(q)$ by $w_j$.

\subsection{Property of Relevant keys}
\begin{theorem}\label{relevant-K}
We have $\forall j\in S(q): qK_{j}^T\geq \frac{11}{4}\delta_1\|K_{j}\|_2^2$ and $\forall j\notin S(q), qK_{j}^T\leq \frac{5}{4}\delta_1\|K_{j}\|_2^2.$
\end{theorem}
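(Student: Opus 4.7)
The plan is to expand $q K_j^T$ via the query decomposition \eqref{q-equation} and then bound each of the resulting pieces using the correlation assumptions \eqref{delta1}, \eqref{delta2} and the noise bound \eqref{z-small}. Write
\[
q K_j^T \;=\; \sum_{\ell \in S(q)} w_\ell\, K_\ell K_j^T \;+\; z(q)K_j^T.
\]
The noise term is uniformly controlled: $|z(q)K_j^T| \le \tfrac{\delta_1}{4}\|K_j\|_2^2$ by \eqref{z-small}. So everything reduces to bounding the weighted sum of inner products $K_\ell K_j^T$ for $\ell \in S(q)$.

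\textbf{Case $j\in S(q)$.} I would isolate the diagonal term and write
\[
q K_j^T \;=\; w_j \|K_j\|_2^2 \;+\; \sum_{\ell \in S(q)\setminus\{j\}} w_\ell\, K_\ell K_j^T \;+\; z(q)K_j^T.
\]
The main term is at least $4\delta_1\|K_j\|_2^2$ by the lower bound in \eqref{w-bounds}. For each $\ell\in S(q)\setminus\{j\}$, both $\ell$ and $j$ lie in $S$, so \eqref{delta1} gives $|K_\ell K_j^T| \le \delta_1 \|K_j\|_2^2$. Since $w_\ell\ge 0$ and $\sum_{\ell\in S(q)\setminus\{j\}} w_\ell \le 1$, the cross-term sum is at most $\delta_1\|K_j\|_2^2$ in absolute value. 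Combining with the noise bound, the lower bound becomes
\[
q K_j^T \;\ge\; \bigl(4 - 1 - \tfrac{1}{4}\bigr)\delta_1\|K_j\|_2^2 \;=\; \tfrac{11}{4}\delta_1\|K_j\|_2^2,
\]
as required.

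\textbf{Case $j\notin S(q)$.} Here there is no diagonal term to keep, and I would split into two subcases based on whether $j\in S$ or $j\notin S$. If $j\in S$, then every $\ell\in S(q)\subseteq S$ satisfies $\ell\ne j$, so \eqref{delta1} again yields $|K_\ell K_j^T|\le \delta_1\|K_j\|_2^2$. If $j\notin S$, then every $\ell\in S(q)\subseteq S$ gives a cross pair of the type governed by \eqref{delta2}, so $|K_\ell K_j^T|\le \delta_2\|K_j\|_2^2 \le \delta_1\|K_j\|_2^2$ since $\delta_2\le\delta_1$. In either subcase, summing against the weights with $\sum_{\ell\in S(q)} w_\ell \le 1$ yields a cross-term contribution of at most $\delta_1\|K_j\|_2^2$, and adding the noise bound $\tfrac{\delta_1}{4}\|K_j\|_2^2$ gives $|q K_j^T|\le \tfrac{5}{4}\delta_1\|K_j\|_2^2$, matching the claim.

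The argument is entirely bookkeeping: the only thing to watch is that the $\min$ appearing in \eqref{delta1}, \eqref{delta2} is used to upper bound by $\|K_j\|_2^2$ (not by $\|K_\ell\|_2^2$), since the target inequalities are normalized by $\|K_j\|_2^2$. No other step is delicate; the slack $4\delta_1 - \delta_1 - \tfrac{\delta_1}{4} = \tfrac{11}{4}\delta_1$ and $\delta_1 + \tfrac{\delta_1}{4} = \tfrac{5}{4}\delta_1$ is precisely what the constants in \eqref{w-bounds} and \eqref{z-small} were chosen to produce.
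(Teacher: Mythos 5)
Your proof is correct and follows essentially the same route as the paper's: expand $qK_j^T$ via \eqref{q-equation}, bound the cross terms with \eqref{delta1} or \eqref{delta2} (using $\sum_\ell w_\ell \le 1$ and the positivity of the weights, which follows from $w_\ell \ge 4\delta_1 > 0$), and absorb the noise via \eqref{z-small}, splitting the case $j\notin S(q)$ into $j\in S$ and $j\notin S$ exactly as the paper does. The constants match the paper's computation, so there is nothing to add.
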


\begin{proof}
 For $j\in S(q)$:
 \begin{align*}
 qK_{j}^T=w_j\|K_{j}\|_2^2+\sum_{\ell\in S(q)\setminus j} w_\ell (K_{\ell}K_{j}^T)+z(q)K_{j}^T\geq \|K_{j}\|_2^2(4\delta_1-\delta_1-\delta_1/4)=(11/4)\delta_1\|K_{j}\|^2,    
 \end{align*}
 
 using (\ref{w-bounds}), (\ref{z-small}) and (\ref{delta1}).
 For $j\in S\setminus S(q)$:
 $|qK_{j}^T|=\left| \sum_{\ell\in S(q)}w_\ell (K_{\ell}K_{j}^T)+z(q)K_{j}^T\right|\leq \|K_{j}\|_2^2(\delta_1+(\delta_1/4))=5\delta_1\|K_{j}\|_2^2/4,$
using (\ref{w-bounds}), (\ref{z-small}) and (\ref{delta1}).

For $j\notin S$:
$|qK_{j}^T|
=\left|\sum_{\ell\in S(q)}w_\ell (K_{\ell }K_{j}^T)+z(q)K_{j}^T  \right|
\leq \|K_{j}\|_2^2(\delta_2+(\delta_1/4))\leq (5/4)\delta_1\|K_{j}\|_2^2$
since $\delta_2\leq \delta_1.$
\end{proof}

\subsection{Algorithm to learn relevant keys}

Let $\frac{1}{1+\delta_1^2|S|+\delta_2^2n}=\rho .$
It is easy to see that
$\frac{A_{ii}^2}{ \sum_{j=1}^n A_{ij}^2}$ is at most the $i$-th leverage score of $K_i$ since the $i$-th leverage score is $\sup_{y \neq 0} \frac{\langle y, K_i \rangle^2}{\sum_{j=1}^n \langle y, K_j \rangle^2}$, which is at least as large as the value in this expression with the particular value $y = K_i$. Let
$U=\{ i: LS(K_{i})\geq \rho \}.$
Since the sum of all leverage scores is at most $d$, we have 
$|U|\leq d(1+\delta_1^2|S|+\delta_2^2n).$
For a suitable setting of parameters, we have
$d(1+\delta_1^2|S|+\delta_2^2n)\in o(n)$ and $|U|\in o(n).$
Let
$U'=\{ i: \frac{(A_{ii})^2}{\left( \sum_{j=1}^n(A_{ij})^2\right)}\geq \rho\}.$ 

We can find $U'$ using our algorithm for finding heavy leverage scores. We then compute $\|K_i\|_2$ for all $i$. This takes $O(nd)$ time. To estimate the row lengths of $A = KK^T$, we use a Johnson-Lindenstrauss sketch to find the row lengths of $KK^TB$ for a random $n\times O(\ln n)$ Gaussian matrix $B$ (see, e.g., \cite{woodruff2014sketching} for background on Johnson Lindenstrauss sketches). When a query $q$ arrives, we check for each $i\in U'$
if $q K_{i}^T\geq (11/4)\|K_{i}\|_2^2$. This suffices since $S\subseteq U'$.

\subsection{A Stochastic Example}
We now present an example $K$ in which the assumptions (\ref{delta1}) and (\ref{delta2}) hold. The example has $K$
 as a random matrix described below. There is a latent subspace $V$ of dimension $k\leq d/4$. For the conceptual description, we assume the first $k$ of the $d$ coordinates of a vector are in $V$ and the remaining $d-k$ coordinates are in $V^\perp$. Note that this is only for ease of description; we do not actually know $V$. Intuitively, for $i\in S$, $K_i$ is mostly in $V$ with a small ``noise'' component in $V^\perp$, whereas, for $i\notin S$, $K_i$ is mostly in $V^\perp$ with a small component in $V$. The matrix drawn below describes the distributions from which each part is generated. The coordinates are i.i.d. inside each of the four parts with variance differing between parts. We assume $\delta_1\geq (4/k)$ and $\delta_2\geq (4\varepsilon_1 / k).$ No condition on $\epsilon_0$ is required. 
 By standard concentration bounds, (\ref{delta1}) and (\ref{delta2}) hold with high probability, so that Lemma (\ref{S-attention}) and Theorem (\ref{relevant-K}) hold and our algorithm applies. 

\[
K_{n \times d} = 
\left[
\begin{array}{c|c}
  \begin{matrix} K^1 \\ \mathcal{N}(0, 1/k) \end{matrix} & \begin{matrix} K^2 \\ \mathcal{N}(0, \varepsilon_1/(d-k)) \end{matrix} \\
  \hline
  \begin{matrix} K^3 \\ \mathcal{N}(0, \varepsilon_1/k) \end{matrix} & \begin{matrix} K^4 \\ \mathcal{N}(0, 1/(d-k)) \end{matrix}
\end{array}
\right]
\]
Here $K^1$ is $\varepsilon_0 n \times k$, and $K^2$ is $\varepsilon_0 n \times (d-k)$, and $K^3$ is $(1-\varepsilon_0) n \times k$, and $K^4$ is $(1-\varepsilon_0) n \times (d-k)$.

\section{Experiments}
\subsection{Structural Properties of Typical Attention Matrices}\label{subsec:structural-properties}
We consider a pretrained ViT model \cite{dosovitskiy2021an} for image classification and consider the properties of the attention matrices across all of the attention heads in the model. The model we consider has six layers and each of the layers has six attention heads. The inputs to the model are tokenized into 197 tokens -- 196 tokens representing the input image and an additional token whose final representation is used for classifying the image (see Figure~\ref{fig:patching}). Therefore each of the attention matrices that we consider is of size 197 $\times$ 197. 

\begin{figure}
  \begin{minipage}[c]{0.31\textwidth}
    \includegraphics[width=\textwidth]{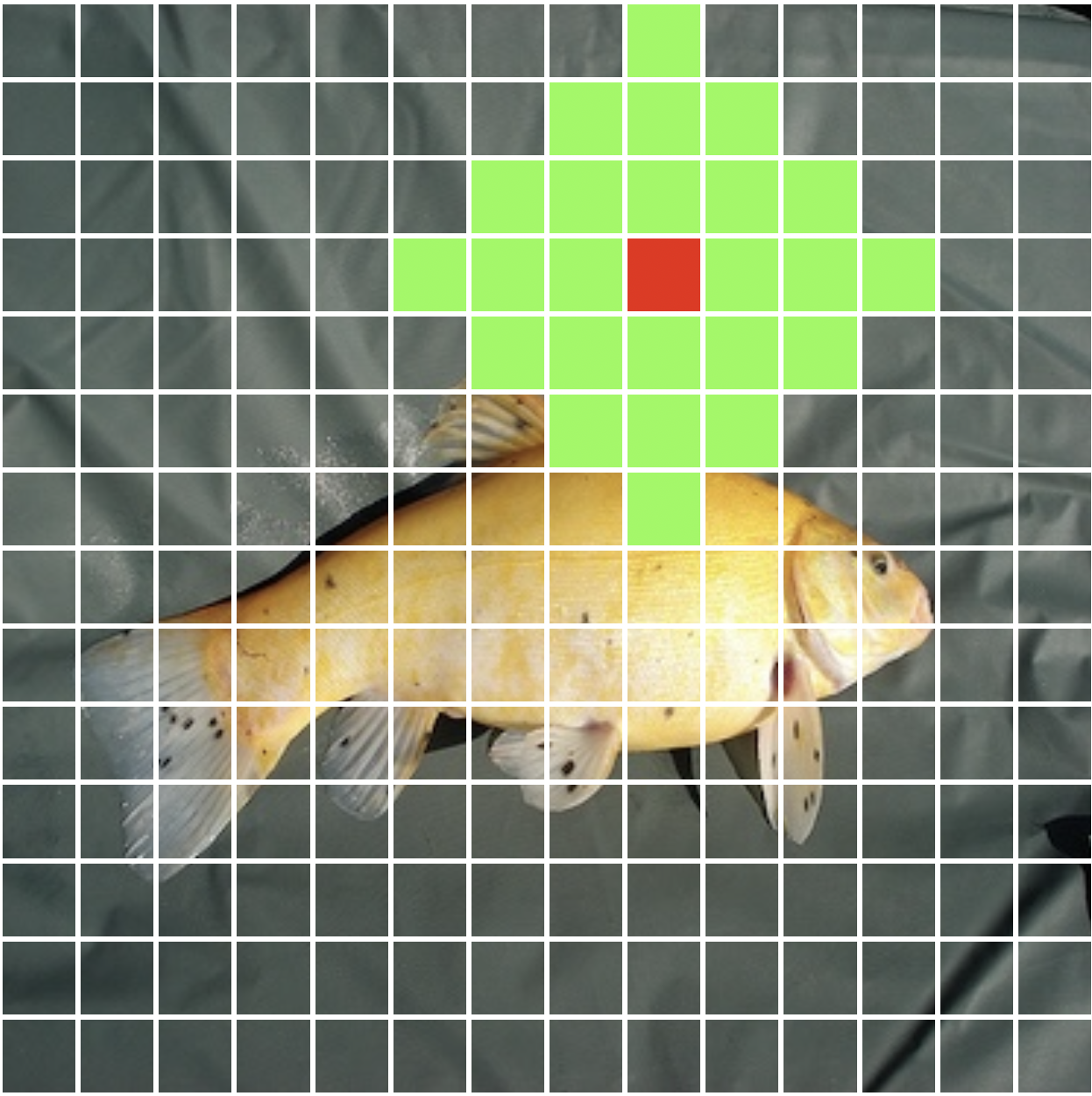}
  \end{minipage}\hfill
  \begin{minipage}[c]{0.6\textwidth}
    \caption{
       Example of an image partitioned into 196 patches using a 14 x 14 grid. The green patches represent the neighbors of the red patch within a Manhattan distance of 3.
    } \label{fig:patching}
  \end{minipage}
\end{figure}

\textbf{Notation.} For $i, j$, given the set of queries $\{Q_1, \ldots, Q_n\} \in \R^d$ and keys $\{K_1, \ldots, K_n\} \in \R^d$, then for $i \in [n]$ and $j \in [n]$, 
\vspace{-1.5em}
\begin{align*}
    A_{i, j} = \frac{\exp(\langle Q_i, K_j\rangle/\sqrt{d})}{\sum_j \exp(\langle Q_i, K_j\rangle/\sqrt{d})}.
\end{align*}
For each $i$, we define $\ell_i(1), \ldots, \ell_i(n)$ such that $A_{i, \ell_i(1)} \ge \cdots \ge A_{i, \ell_i(n)}$.

\textbf{Top Heaviness.} We find that at many of the attention heads, the attention weight of a token is distributed in a ``top heavy'' manner, meaning that for many tokens $i$, the sum of the top-32 attention weights, defined as $\sum_{j = 1}^{32} A_{i, \ell_i(j)}$ is quite large. We show the distribution of top-32 attention weights for an image input in Figure~\ref{fig:top-mass} in the Appendix. For a given attention head, we observe that the distribution of top-32 attention weights remains similar across multiple inputs.

\textbf{Locality.} We then measure the amount of attention mass that is solely captured by the ``neighboring'' tokens (see Figure~\ref{fig:patching}). We observe that only at a small fraction of heads, the attention mass of tokens is fully captured by the local tokens and that in general, a significant portion of the attention mass is distributed among non-local tokens. In Figure~\ref{fig:local-mass} in the Appendix, we show the histograms of the attention weights captured by local tokens at each of the attention heads in all the layers of the model for a typical input. 

\textbf{Important Keys.}
For $j \in [n]$, we define $W_j$ to be the non-local attention weight captured by $j$ as $\sum_{i : (i, j)\ \text{are not neighbors}}A_{i, j}$. We define important keys to be the set of 32 keys with the largest $W_j$ values.
We find that in the initial layers of the models, at many attention heads, most of the attention weight is captured by the local tokens and a small set of ``important keys''. In Figure~\ref{fig:capture-local} in the Appendix, we show the distribution of the attention weight captured by the set of ``important keys'' along with the local tokens.
\subsection{Inference from Softmax ViT models via leverage score selection}
\begin{small}
\begin{table}[htbp]
    \centering
\begin{tabular}{l c}\toprule
     Model & Accuracy on validation set  \\  \midrule 
     S/16 (softmax) & 76.47\% \\
     S/16 (LevAttention, top-32, pretrained w/ softmax) & 13.3\% \\
     S/16 ($\ell_2$ norm selection, top-32, pretrained w/ softmax) & 3.3\% \\
     S/16 (LevAttention, top-32) & 68.30\% \\
     S/16 (LevAttention, top 64) & 72.48\% \\ \midrule
     L/16 (softmax) & 78.83\%\\
     L/16 (LevAttention, top-32, pretrained w/ softmax) & 48.58\% \\ 
     L/16 ($\ell_2$ norm selection, top-32, pretrained w/ softmax) & 8.9\%\\
     L/16 (LevAttention, top-32) & 75.12\% \\ 
     L/16 (LevAttention, top-64) & 77.27\% \\
     L/16 (LevAttention, top-128) & 77.17\% \\
 \midrule
     L/8 (softmax) & 79.47\%\\ 
     L/8 (LevAttention, top-32, pretrained w/ softmax) & 0.8\%\\
     L/8 (LevAttention, top-32) & 71.96\% \\
     L/8 (LevAttention, top-64) & 74.64\% \\
     L/8 (LevAttention, top-128) & 76.69\% \\\bottomrule
\end{tabular}   
    \caption{Accuracies of models with various attention mechanisms.}
    \label{tab:results}
\end{table}
    
\end{small}
We consider three ViT models from the work of \citet{dosovitskiy2021an}: (i) S/16, a small 22M parameter model that splits an image into 196 patches each of size 16 $\times$ 16 pixels, (ii) L/16, a large 305M parameter model that splits an image into 196 patches each of size 16 $\times$ 16 pixels, and (iii) L/8, a large 305M parameter model that splits an image into 784 patches each of size 8 $\times$ 8 pixels. Note that all the models have one token appended whose representation after processing through the transformer is used for classifying images.

We train the models on the Imagenet-1k \citep{imagenet15russakovsky} dataset using the same hyperparameters as in the original work \citep{dosovitskiy2021an}. We find that the S/16, L/16 and L/8 models achieve accuracies of 76.47\%, 78.83\% and 79.47\% respectively on the validation split of the Imagenet-1k dataset.

We then estimate the performance of leverage score selection on these datasets. At each attention head, we compute the keys with the 32 largest $\ell_2$ leverage scores and then make each query attend only to this set of keys in the attention mechanism. While we find that the accuracies drop significantly compared to the full softmax attention, our results show that the models still achieve non-trivial accuracies. In particular, the L/16 model has an accuracy of 48.58\% using this mechanism. If we instead select top-32 keys based on the squared row norms, we observe that the accuracy of the L/16 model drops to 8.9\%. This supports the idea that leverage scores are much better predictors of the \emph{importance} of the keys compared to row norms.

\subsection{Training ViT models}
In the previous experiments, we used models trained using softmax attention and used LevAttention only at inference time. To see if the performance of the models improve if they are \emph{aware} of LevAttention, we use the leverage score selection based attention mechanism to train the models as well. Using the same training setup as the softmax attention models, i.e., the same learning rate schedule, batch sizes, and optimizer, we see significant improvements in the validation accuracies. For the L/8 and L/16, we train for the initial ~15\% of the steps with full attention to obtain the warm start parameters and then train the remaining 85\% of the steps using the leverage score selection based attention. We report the results in Table~\ref{tab:results}.

\paragraph{Comparison with row norm selection / random selection.} To test if training the models with the awareness of ``leverage score selection'' truly is useful, we train models using (i) ``row norm selection'' where at each attention head, we pick 32 keys with the largest $\ell_2$ norms,  and (ii) ``random selection'', where at each attention head, we pick 32 random key positions and only make the queries attend to those positions throughout the training process. We observe that these models achieve similar performance to the models trained via ``leverage score selection''. This shows that the ``selection aware'' training procedure is currently unable to translate the usefulness of leverage scores, as identified in the previous section, into obtaining better models than those achieved by ``row norm selection'' and ``random selection'' attention mechanisms. We note that only leverage scores have some provable guarantees, while the other methods do not. We leave the important question of obtaining better models trained using LevAttention as a future research direction.

\newpage
\bibliography{iclr2025_conference}
\newpage

\section{Appendix: Experimental Results}
In this section, we present empirical observations on the behavior of attention matrices in a ViT model trained using softmax attention. We use the same setup as in \citep{dosovitskiy2021an} to train a 6-layer transformer model with 6 attention heads in each of the layers on the Imagenet-1k dataset. We consider a typical input to the model and in Figures~\ref{fig:top-mass}, \ref{fig:local-mass} and \ref{fig:capture-local}, we present the properties of the attention weights matrix across different layers and attention heads in the model.
\begin{figure}[htbp]
    \centering
\includegraphics[width=0.95\linewidth]{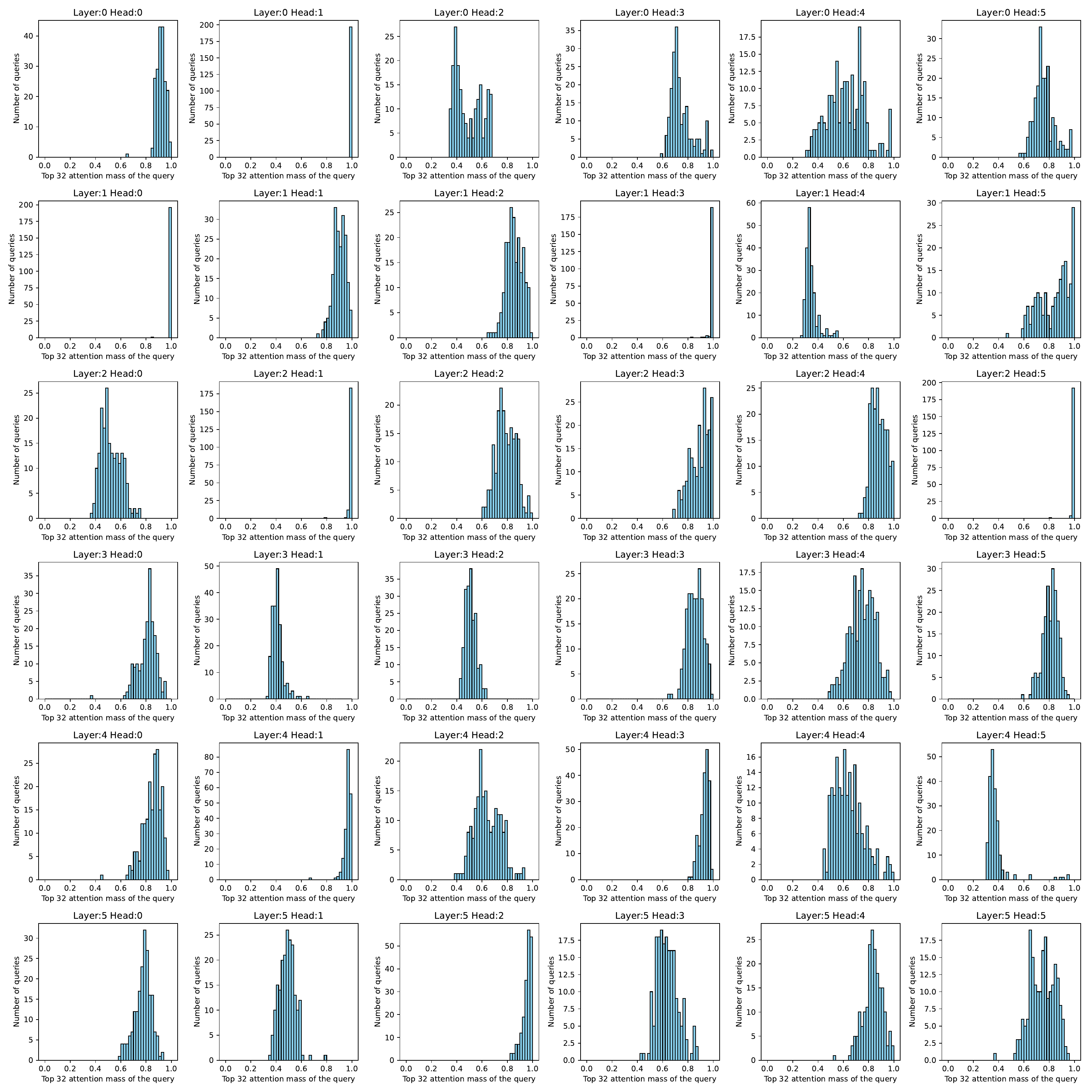}
    \caption{Histograms of top-32 attention weights. Each of the histograms plots the distribution of the sum of top-32 attention weights for query tokens. We note that at many attention heads, for many query tokens, the largest 32 attention weights constitute a significant fraction of the total attention weight.}
    \label{fig:top-mass}
\end{figure}
\begin{figure}
    \centering
\includegraphics[width=0.95\linewidth]{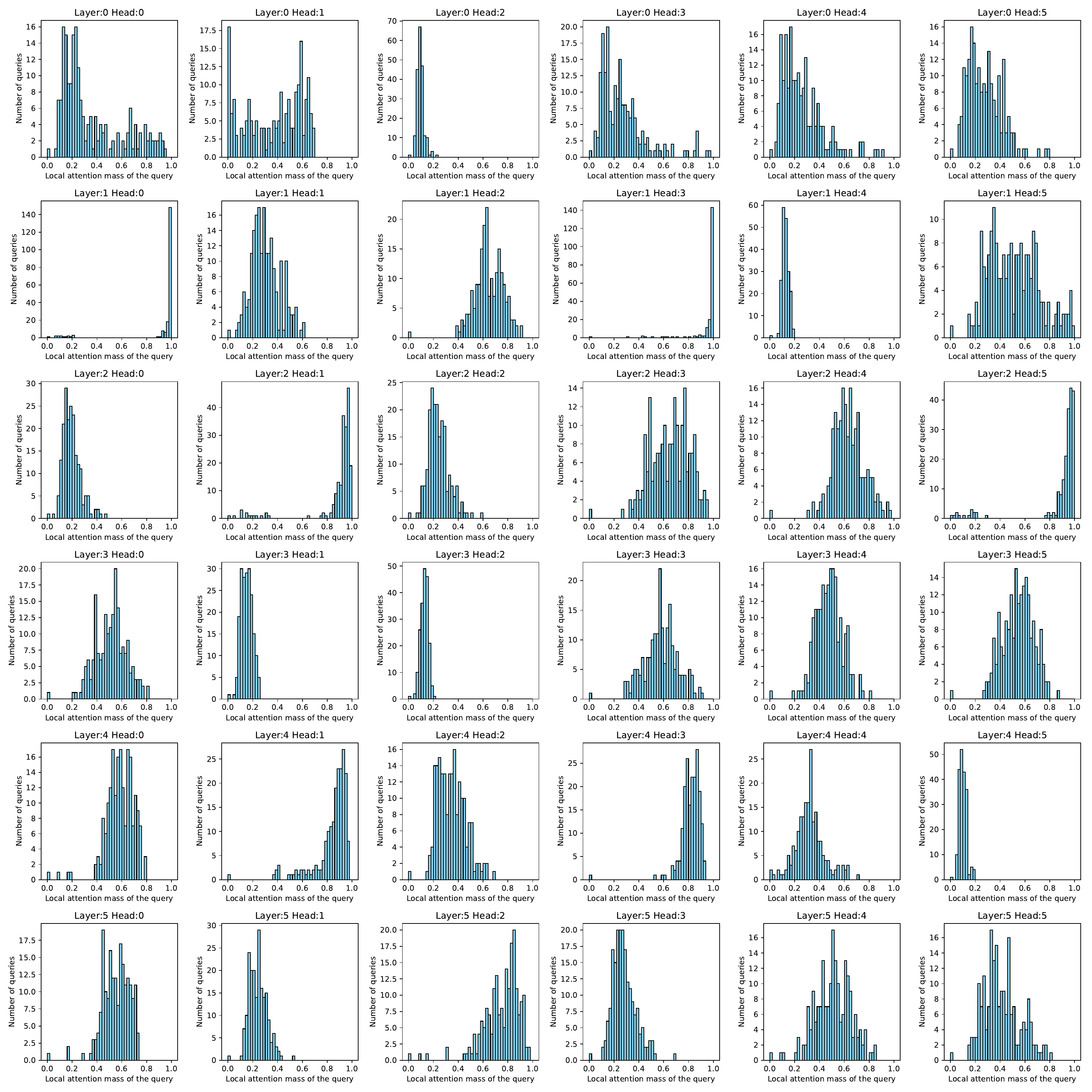}
    \caption{Histograms of attention weights captured by local tokens. Each of the histograms plots the distribution of attention weight captured by keys within a Manhattan distance of 3.}
    \label{fig:local-mass}
\end{figure}
\begin{figure}
    \centering
    \includegraphics[width=0.95\linewidth]{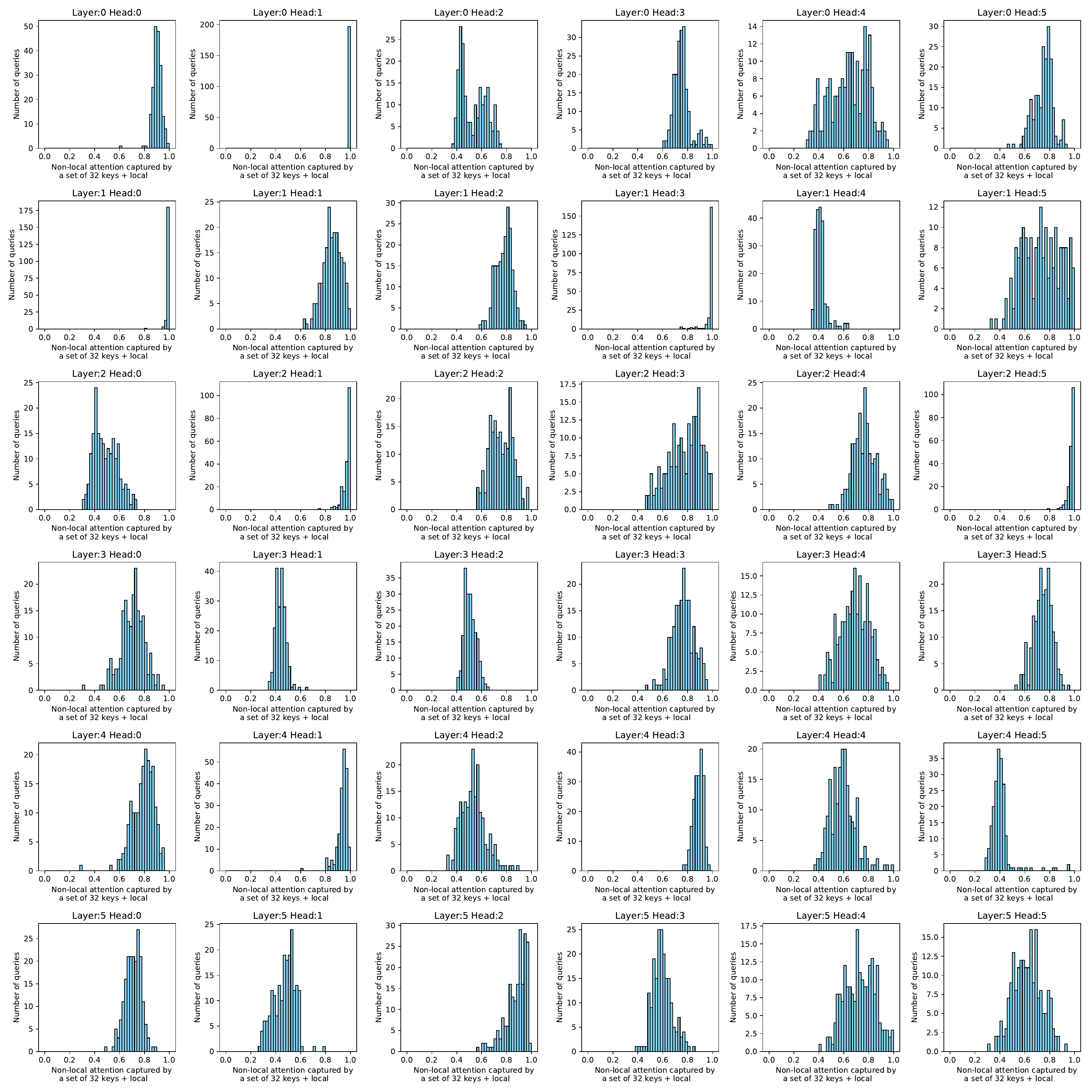}
    \caption{Histograms of Attention weights captured by the \emph{important keys} along with \emph{local tokens}. As discussed in Section~\ref{subsec:structural-properties}, \emph{important keys} are defined as the 32 keys capturing the largest attention weight at a given attention head. The histogram shows that at a large number of attention heads, the important keys together with local keys are able to capture a significant fraction of the attention weight for a large number of queries.}
    \label{fig:capture-local}
\end{figure}

\end{document}